\title[Can Optimal Transport Improve Federated Inverse Reinforcement Learning?]{Can Optimal Transport Improve Federated Inverse \\ Reinforcement Learning?}
\DeclareMathOperator*{\argmin}{arg\,min}
\DeclareMathOperator*{\argmax}{arg\,max}
\DeclareMathOperator{\E}{\mathbb{E}}
\newcommand{\R}{\mathbb{R}}
\newtheorem{assumption}{Assumption}
\begin{document}

\maketitle

\begin{abstract}%
In robotics and multi-agent systems, fleets of autonomous agents often operate in subtly different environments while pursuing a common high-level objective. Directly pooling their data to learn a shared reward function is typically impractical due to differences in dynamics, privacy constraints, and limited communication bandwidth. This paper introduces an optimal transport–based approach to federated inverse reinforcement learning (IRL). Each client first performs lightweight Maximum Entropy IRL locally, adhering to its computational and privacy limitations. The resulting reward functions are then fused via a Wasserstein barycenter, which considers their underlying geometric structure. We further prove that this barycentric fusion yields a more faithful global reward estimate than conventional parameter averaging methods in federated learning. Overall, this work provides a principled and communication-efficient framework for deriving a shared reward that generalizes across heterogeneous agents and environments.
\end{abstract}

\begin{keywords}
  Inverse Reinforcement Learning, Federated Learning, Optimal Transport
\end{keywords}

\section{Introduction}
With the growing demand for personalized robotics, it has become increasingly important to enable experience sharing that accelerates learning while preserving privacy. These robots often operate under expert policies in environments whose dynamics evolve over time—for instance, a household vacuum robot adapting to a new layout or a self-driving car navigating an unfamiliar city. Such shifts alter the underlying transition dynamics, yet the robot’s performance should remain robust to these changes. Moreover, within a network of robots, it is likely that another agent has already adapted to a similar environment. However, because each robot operates in a private domain, it is essential to prevent any information leakage about an individual client’s environment. Finally, many robotic platforms are designed primarily for inference or lightweight fine-tuning and therefore lack the computational resources to perform full-scale IRL to recover optimal reward parameters for their specific operating conditions. This setting motivates the problem of federated IRL, the ability to share and aggregate learned reward functions across multiple agents in a privacy-preserving manner.

Many recent works have begun to explore federated variants of inverse reinforcement learning. In many approaches, model fusion is achieved through direct parameter averaging~\cite{mcmahan2017communication}, without accounting for the geometric structure of the underlying reward landscape. While these averaging schemes are simple and communication-efficient, they often yield suboptimal or unstable global rewards when local parameters are poorly fitted, an expected outcome for clients with limited computational resources or insufficient expert demonstrations. In highly heterogeneous settings, such instability can propagate through aggregation, leading to inconsistent or even unsafe reward estimates that degrade overall system performance. Moreover, the resulting global reward may fail to preserve consistent behavioral semantics across clients operating under distinct dynamics. These limitations motivate the development of a geometry-aware fusion mechanism that produces faithful global rewards even under noisy, biased, or undertrained local estimates.

To address these challenges, we treat each locally learned reward in the discrete setting as what it parameterizes: a function over a shared state–action support. We explicitly evaluate this function on the common lattice, apply a shift-and-normalize step to obtain a probability distribution of reward mass, and then fuse clients by computing an entropically regularized Wasserstein barycenter over these normalized measures. This geometry-aware fusion preserves spatial structure, dampens underfit artifacts from weak clients, and yields a stable, semantically consistent global reward under heterogeneity. Finally, we project the fused measure back to parameters via least squares in the shared feature basis, enabling lightweight clients to contribute without sharing trajectories or destabilizing the model. Our contributions are as follows:
\begin{enumerate}
    \item We introduce a federated inverse reinforcement learning framework that fuses learned reward functions using an entropically regularized Wasserstein barycenter.
    \item We provide a theoretical analysis establishing stability and parameter-error bounds for the proposed barycentric aggregation, showing that it contracts toward the true reward function under bounded local estimation error.
    \item We demonstrate through empirical studies on discrete and continuous-control benchmarks that barycentric reward fusion consistently yields more stable and transferable global rewards than parameter averaging.
\end{enumerate}

\section{Related Work}\label{sec:related}
Learning reward functions across multiple agents with heterogeneous dynamics has been explored from several perspectives. In federated and distributed IRL, prior work has developed decentralized optimization schemes and convergence guarantees for recovering rewards without centralizing expert trajectories~\cite{gong2023federated,banerjee2021identity,hassan2024enhancing}. However, these approaches typically retain the standard parameter-averaging server update from conventional federated learning, focusing primarily on modifying local optimization steps rather than the aggregation mechanism itself. Beyond purely federated settings, multi-task and transfer-style IRL methods link related tasks to share statistical strength, for example, through maximum causal entropy formulations, teacher–student architectures, and group learning \cite{gleave2018multi,noothigattu2021inverse,liu2022distributed}. In contrast, our formulation preserves the privacy and communication constraints central to federated learning while explicitly addressing heterogeneity in transition dynamics and demonstration distributions, as well as hardware limitations that can lead to underfitted local reward parameters.

Classical IRL methods include early formulations of the inverse problem~\cite{ng2000algorithms}, apprenticeship learning by feature expectation matching~\cite{abbeel2004apprenticeship}, maximum entropy (MaxEnt) IRL~\cite{ziebart2008maximum}, and adversarial variants such as GAIL~\cite{ho2016generative}. We adopt a MaxEnt local learner for its robustness and well-studied gradients \cite{snoswell2020revisiting}. Moreover, MaxEnt IRL integrates with the linear function approximation, which we employ during experimentation to maintain meaningful support while keeping computational complexity manageable~\cite{song2025survey}. Although our framework is designed to be lightweight, we explicitly assume a limited computational budget on the client side, consistent with prior work highlighting the resource constraints of embedded and mobile robotic platforms~\cite{neuman2022tiny,imteaj2020fedar}. In the broader federated learning literature, FedAvg remains the canonical baseline~\cite{mcmahan2017communication}, and, to the best of our knowledge, no prior work has proposed an alternate aggregation technique when federating IRL. 

Optimal transport provides a geometry-aware framework for comparing and aggregating distributions. Entropic regularization enables efficient computation through the Sinkhorn algorithm~\cite{cuturi2013sinkhorn}, and the Wasserstein barycenter defines a principled notion of averaging over probability measures~\cite{agueh2011barycenters}. Recent work has applied optimal transport to reinforcement learning, using it to analyze reward ambiguity in IRL~\cite{baheri2023understanding} and to develop consensus or policy update mechanisms in multi-agent RL based on Wasserstein distance~\cite{pereira2025heterogeneous,baheri2025wassersteinbarycenterconsensuscooperativemultiagent,shahrooei2025wasserstein,salgarkar2025distance}. In this work, we compute a Wasserstein barycenter over locally learned reward functions to obtain a fused reward that remains interpretable, privacy-preserving, and robust to training inconsistencies.

\section{Preliminaries \& Problem Formulation}\label{sec:prelim}

This section presents the components of our method and defines the problem domain it addresses.

\subsection{Inverse Reinforcement Learning}

IRL seeks to recover the underlying reward function that explains observed expert behavior in a Markov Decision Process (MDP). Given an MDP $\mathcal{M} = (\mathcal{S}, \mathcal{A}, P, \gamma)$ with an unknown reward function $R^*: \mathcal{S} \times \mathcal{A} \to \mathbb{R}$ and expert demonstrations $\mathcal{D}^E = \{\tau_j\}_{j=1}^{N}$ where each trajectory $\tau_j = (s_0, a_0, \dots, s_T, a_T)$ represents a sequence of state–action pairs, the goal is to infer a reward function $\hat{R}$ such that the optimal policy under $\hat{R}$ reproduces the expert’s behavior. Formally, IRL seeks $\hat{R}$ satisfying
\begin{align}
\pi_{\hat{R}}^* = \argmax_{\pi} \; \mathbb{E}_{\pi,P}\!\left[\sum_{t=0}^T \gamma^t \hat{R}(s_t,a_t)\right],
\quad \text{with} \quad
\pi_{\hat{R}}^* \approx \pi_{R^*}^*.
\end{align}
In our work, we seek to find a linear function approximation such that $\hat{R}_\theta(s_t,a_t) = \theta^\top \phi(s_t,a_t),$ where $\phi(s,a)$ denotes the feature vector and $\theta$ the reward parameters. MaxEnt IRL models the trajectory distribution as
\begin{align}
p(\tau \,|\, \theta) = \frac{1}{Z(\theta)} 
\exp\!\left(\sum_{t=0}^T \gamma^t \theta^\top \phi(s_t,a_t)\right),
\end{align}
where $Z(\theta)$ represents the partition function ensuring normalization. The objective maximizes the likelihood of expert trajectories under this distribution while maintaining maximum entropy. 

\subsection{Wasserstein Barycenter}
Optimal transport provides a principled mathematical framework for measuring dissimilarities between probability distributions by considering the minimal transport cost required to move one distribution onto another. Given measures $\mu$ and $\nu$ on a metric space $(\mathcal{X}, d)$, the 2-Wasserstein distance is defined as
\begin{align}
W_2^2(\mu, \nu) = \inf_{\gamma \in \Pi(\mu, \nu)} \int_{\mathcal{X} \times \mathcal{X}} d(x,y)^2 \, d\gamma(x,y),
\end{align}
where $\Pi(\mu, \nu)$ is the set of all couplings whose marginals are $\mu$ and $\nu$. Intuitively, $W_2$ represents the minimum amount of work that must be done to transform one probability distribution into another, capturing not only pointwise differences but also their geometric relationships. Building on this, the Wasserstein barycenter generalizes the notion of an arithmetic mean to the space of probability measures endowed with the Wasserstein metric. Given a collection of distributions $\{\mu_i\}_{i=1}^N$ and nonnegative weights $\{\alpha_i\}_{i=1}^N$ summing to one, their barycenter $\bar{\mu}$ is defined as the minimizer
\begin{align}
\bar{\mu} = \argmin_{\mu \in \mathcal{P}_2(\mathcal{X})} \sum_{i=1}^N \alpha_i W_2^2(\mu, \mu_i),
\end{align}
where $\mathcal{P}_2(\mathcal{X})$ denotes the space of probability measures with finite second moments. The barycenter thus represents the most geometrically consistent “average” of the input distributions, one that lies in a central position with respect to the Wasserstein distance.

\subsection{Problem Formulation}
We consider the problem of horizontal federated reinforcement learning \cite{qi2021federated}. We begin with $N$ clients, where each client $i$ may correspond to a single agent or a set of cooperating agents. Let $\mathcal{M}_i = (\mathcal{S}, \mathcal{A}, P_i, \gamma)$ denote the MDP for client $i$, where $\mathcal{S}$ is the shared state space, $\mathcal{A}$ is the shared action space, $P_i: \mathcal{S} \times \mathcal{A} \times \mathcal{S} \to [0,1]$ represents the client-specific transition dynamics, and $\gamma \in (0,1)$ is the discount factor. 
\begin{assumption}[Shared finite support and metric]\label{ass:shared-support}
All clients share a common finite state--action space $X = S \times A$ with $|X| = n$, equipped with a ground metric $d$ of diameter $D$ and minimum nonzero spacing $\delta>0$. In continuous-control experiments, this finite space corresponds to a discretized lattice over the joint state--action domain used to construct the transport cost.
\end{assumption}
Next, each client $F_i$ has access to a local dataset of expert demonstrations 
\begin{align}
    \mathcal{D}_i^E = \{\tau_j^{(i)}\}_{j=1}^{N_i}, \quad \text{where} \quad 
\tau_j^{(i)} = (s_0, a_0, s_1, a_1, \ldots, s_T, a_T)
\end{align}
represents a trajectory of state–action pairs generated by an expert operating under an unknown reward function 
$R^*: \mathcal{S} \times \mathcal{A} \to \mathbb{R}$.
The fundamental challenge in federated IRL  is to recover a shared reward function that explains expert behavior across all clients while accounting for heterogeneity in both dynamics and demonstrations, without disclosing any expert data or behavioral information to other clients or the central server. Furthermore, the recovered reward function $\hat{R}$ should not only explain the expert demonstrations observed in each local MDP $\mathcal{M}_i$, but also generalize across both seen and unseen environments. Formally, $\hat{R}$ should satisfy
$
\hat{R} \approx R^* \quad \text{on} \quad \bigcup_{i=1}^N \mathcal{M}_i,
$
and exhibit transferability such that, for any new MDP 
$\mathcal{M}_{\text{new}} = (\mathcal{S}, \mathcal{A}, P_{\text{new}}, \gamma)$ 
with transition dynamics $P_{\text{new}}$ drawn from the same distributional family as $\{P_i\}_{i=1}^N$, the induced optimal policy
\begin{align}
\pi_{\hat{R}}^* 
= \argmax_{\pi} \; 
\E_{\pi, P_{\text{new}}}\!\left[\sum_{t=0}^{T} \gamma^t \hat{R}(s_t, a_t)\right]
\end{align}
closely matches the expert policy $\pi_{R^*}^*$ under the true reward $R^*$. 
\begin{assumption}[Feature map and conditioning]\label{ass:features}
The reward is represented in a shared feature basis $\Phi \in \mathbb{R}^{n\times d}$ of full column rank. Let $\sigma_{\min}(\Phi)$ denote its smallest singular value and define $\kappa_\Phi := \sigma_{\min}(\Phi)^{-1} < \infty$.
\end{assumption}

\section{Methodology}\label{sec:method}

In this section, we present our method. Unlike standard federated learning, which alternates local updates and global aggregation over multiple rounds, we perform a single aggregation step where locally learned rewards are fused once using the proposed barycentric approach.

\subsection{Client-side Learning}

Each client independently solves a local inverse reinforcement learning problem using the MaxEnt IRL framework. For client $F_i$, we assume a linear reward parameterization $R_i(s,a) = \theta_i^T \phi(s,a)$, where $\phi: \mathcal{S} \times \mathcal{A} \to \R^d$ is a feature mapping and $\theta_i \in \R^d$ are the reward parameters. The MaxEnt IRL objective for agent $i$ is:
\begin{equation}
\mathcal{L}_i(\theta_i) = \E_{\tau \sim \mathcal{D}_i^E}\left[\sum_{t=0}^T \gamma^t \theta_i^T \phi(s_t, a_t)\right] - \log Z_i(\theta_i) - \frac{\lambda}{2} \|\theta_i\|_2^2
\end{equation}
where $Z_i(\theta_i)$ is the partition function. The gradient of the objective with respect to $\theta_i$ is:
$
\nabla_{\theta_i} \mathcal{L}_i = \mu_{\mathcal{D}_i^E} - \mu_{\pi_{\theta_i}} - \lambda \theta_i
$
where $\mu_{\mathcal{D}_i^E}$ represents the empirical feature expectations and $\mu_{\pi_{\theta_i}}$ denotes the expected feature counts under the policy induced by the current reward parameters. 

\subsection{Aggregation}

After local learning, each client $F_i$ has parameters $\theta_i^*$ inducing a reward function $R_i(s,a) = (\theta_i^*)^T \phi(s,a)$. These parameters are then sent to a central server. This server assumes access to a probe set (a deterministic environment that is similar to the clients). We then aggregate these heterogeneous reward functions into a consensus reward using optimal transport theory. We treat each reward function as a signed measure and seek the Wasserstein barycenter:
\begin{equation}
R^* = \argmin_{R} \sum_{i=1}^K \alpha_i W_2^2(R, R_i)
\end{equation}
where $\alpha_i$ are aggregation weights and $W_2$ is the 2-Wasserstein distance. We make this tractable by converting each reward function to a probability measure via a positive shift and normalization. Furthermore, the direct computation of the Wasserstein barycenter is computationally prohibitive. We employ entropic regularization, leading to a smoothed problem that can be solved efficiently via the Sinkhorn-Knopp algorithm. The Wasserstein barycenter is then computed via alternating optimization with momentum updates for faster convergence.

\subsection{Parameter Recovery}

After computing the barycenter reward function $R^*$, we recover the corresponding parameters $\theta^*$ by solving a least-squares problem:
\begin{equation}
\theta^* = \argmin_{\theta} \sum_{s \in \mathcal{S}} \|R^*(s, a) - \theta^T \phi(s, a)\|_2^2
\end{equation}
This has the closed-form solution $\theta^* = (\Phi^T \Phi)^{-1} \Phi^T r^*$, where $\Phi$ is the feature matrix and $r^*$ is the vectorized barycenter reward.

\section{Theoretical Results}\label{sec:theory}
We now analyze the effect of Wasserstein barycentric fusion on both the learned reward and the downstream control performance. 
\begin{theorem}[Stability and parameter-error bound]\label{thm:fed-irl-ot-main}
Suppose Assumptions~\ref{ass:shared-support} and~\ref{ass:features} hold.
Each agent $i\in\{1,\dots,K\}$ runs local MaxEnt IRL and outputs a reward vector $\hat r_i\in\mathbb{R}^n$.
Fix a shift $\sigma>0$ so that all entries of $\hat r_i+\sigma\mathbf{1}$ and of a shared population reward $r^\star$ are positive, and define
\begin{align}
T_\sigma(r)\;=\;\frac{r+\sigma\mathbf{1}}{\mathbf{1}^\top(r+\sigma\mathbf{1})}\in\Delta_n,\quad
&p_i:=T_\sigma(\hat r_i),\\ &p^\star:=T_\sigma(r^\star),\\
&Z(r):=\mathbf{1}^\top(r+\sigma\mathbf{1}),
\end{align}
with $Z_{\min}:=\min\,\!\bigl(\min_i Z(\hat r_i),\,Z(r^\star)\bigr)$.
Let $\bar p$ be the $2$-Wasserstein barycenter of $\{p_i\}_{i=1}^K$ with weights $\alpha\in\Delta_K$,
and map back to a reward via $\bar r_Z:=Z\,\bar p-\sigma\mathbf{1}$ for a chosen scalar $Z>0$.
Assume local IRL accuracy
$
\|\hat r_i - r^\star\|_1 \le \varepsilon_i \quad (i=1,\dots,K).
$
Then:
\begin{align}\label{eq:T1}
W_2(\bar p,\,p^\star)
&\le \frac{2D}{\sqrt{Z_{\min}}}\Bigl(\textstyle\sum_{i=1}^K \alpha_i \varepsilon_i\Bigr)^{1/2}, 
\end{align}
and if we choose $Z=Z(r^\star)$ so that $r^\star=Z(r^\star)\,p^\star-\sigma\mathbf{1}$, then
\begin{align} \label{eq:T2}
\|\bar r_{Z(r^\star)} - r^\star\|_2
&\le \frac{2\sqrt{2}\,D\,Z(r^\star)}{\delta\sqrt{Z_{\min}}}\Bigl(\textstyle\sum_{i=1}^K \alpha_i \varepsilon_i\Bigr)^{1/2}. 
\end{align}
Moreover, with $\theta^\star=\arg\min_\theta\|\Phi\theta-r^\star\|_2$ and
$\hat\theta=(\Phi^\top\Phi)^{-1}\Phi^\top \bar r_{Z(r^\star)}$, the parameter error satisfies
\begin{align}\label{eq:T3}
\|\hat\theta-\theta^\star\|_2
\;\le\;
\kappa_\Phi\;\frac{2\sqrt{2}\,D\,Z(r^\star)}{\delta\sqrt{Z_{\min}}}\Bigl(\textstyle\sum_{i=1}^K \alpha_i \varepsilon_i\Bigr)^{1/2},
\end{align}
where $\kappa_\Phi$ is as in Assumption~\ref{ass:features}.
\end{theorem}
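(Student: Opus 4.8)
The plan is to chain four elementary ingredients. First I would use the variational characterization of the barycenter: $\bar p$ minimizes $p\mapsto\sum_i\alpha_i W_2^2(p,p_i)$. Second, I would invoke a two-sided comparison between $W_2^2$ and the $\ell_1$ distance valid on any finite metric space: the mass that any coupling of $\mu,\nu$ is forced to move off the diagonal equals $\tfrac12\|\mu-\nu\|_1$, so bounding the per-unit transport cost by $D^2$ above and by $\delta^2$ below gives $\tfrac{\delta^2}{2}\|\mu-\nu\|_1\le W_2^2(\mu,\nu)\le\tfrac{D^2}{2}\|\mu-\nu\|_1$. Third, I would establish an $\ell_1\!\to\!\ell_1$ Lipschitz estimate for the shift-and-normalize map $T_\sigma$ (using the positivity guaranteed by the choice of $\sigma$). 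Fourth, I would apply least-squares perturbation through the pseudoinverse $\Phi^+=(\Phi^\top\Phi)^{-1}\Phi^\top$, whose spectral norm is $\sigma_{\min}(\Phi)^{-1}=\kappa_\Phi$ by Assumption~\ref{ass:features}. Composing these in order should yield (\ref{eq:T1})--(\ref{eq:T3}).

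For (\ref{eq:T1}): by optimality of $\bar p$, $\sum_i\alpha_i W_2^2(\bar p,p_i)\le\sum_i\alpha_i W_2^2(p^\star,p_i)$. Combining the triangle inequality for $W_2$ with $(a+b)^2\le 2a^2+2b^2$ gives $W_2^2(\bar p,p^\star)\le 2W_2^2(\bar p,p_i)+2W_2^2(p_i,p^\star)$ for each $i$; I then average with weights $\alpha_i$ and use the optimality inequality to get $W_2^2(\bar p,p^\star)\le 4\sum_i\alpha_i W_2^2(p_i,p^\star)$. For the pairwise terms, writing $u_i=\hat r_i+\sigma\mathbf 1$, $v=r^\star+\sigma\mathbf 1$, $Z_i=\mathbf 1^\top u_i$, $Z^\star=\mathbf 1^\top v$, the identity $p_i-p^\star=\tfrac{Z^\star-Z_i}{Z_i Z^\star}\,u_i+\tfrac{1}{Z^\star}(u_i-v)$ together with $\|u_i\|_1=Z_i$ and $|Z^\star-Z_i|\le\|u_i-v\|_1=\|\hat r_i-r^\star\|_1\le\varepsilon_i$ gives $\|p_i-p^\star\|_1\le 2\varepsilon_i/Z^\star\le 2\varepsilon_i/Z_{\min}$. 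The diameter bound then yields $W_2^2(p_i,p^\star)\le D^2\varepsilon_i/Z_{\min}$, and (\ref{eq:T1}) follows after taking square roots.

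For (\ref{eq:T2}): from $\bar r_{Z(r^\star)}=Z(r^\star)\bar p-\sigma\mathbf 1$ and $r^\star=Z(r^\star)p^\star-\sigma\mathbf 1$ I obtain the exact identity $\bar r_{Z(r^\star)}-r^\star=Z(r^\star)(\bar p-p^\star)$, so it remains to convert $W_2$ back to $\ell_2$. The spacing lower bound gives $\|\bar p-p^\star\|_1\le\tfrac{2}{\delta^2}W_2^2(\bar p,p^\star)$, and since $\bar p,p^\star\in\Delta_n$ we have $\|\bar p-p^\star\|_\infty\le 1$, hence $\|\bar p-p^\star\|_2^2\le\|\bar p-p^\star\|_\infty\|\bar p-p^\star\|_1\le\|\bar p-p^\star\|_1$, giving $\|\bar p-p^\star\|_2\le\tfrac{\sqrt 2}{\delta}W_2(\bar p,p^\star)$. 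Multiplying by $Z(r^\star)$ and substituting (\ref{eq:T1}) yields (\ref{eq:T2}). Finally (\ref{eq:T3}) is immediate: $\theta^\star=\Phi^+r^\star$ (the least-squares solution, by full column rank of $\Phi$) and $\hat\theta=\Phi^+\bar r_{Z(r^\star)}$, so $\hat\theta-\theta^\star=\Phi^+(\bar r_{Z(r^\star)}-r^\star)$ and $\|\hat\theta-\theta^\star\|_2\le\kappa_\Phi\|\bar r_{Z(r^\star)}-r^\star\|_2$.

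The step I expect to be most delicate is the normalization estimate in the third ingredient: one must track that both the normalizer perturbation $|Z^\star-Z_i|$ and the numerator perturbation $\|u_i-v\|_1$ contribute, producing the factor $2/Z_{\min}$ rather than $1/Z_{\min}$, and one must replace $Z^\star$ by the lower bound $Z_{\min}$ at the correct moment. A secondary point needing a one-time careful statement is the coupling--diagonal argument behind the $W_2^2$--$\ell_1$ sandwich (the forced off-diagonal mass of any coupling equals $\tfrac12\|\mu-\nu\|_1$, from $\gamma(x,x)\le\min(\mu(x),\nu(x))$). One caveat not addressed by the theorem is that $\bar p$ is the exact $W_2$ barycenter, so the entropic/Sinkhorn approximation error of the implemented algorithm would contribute a separate additive term.
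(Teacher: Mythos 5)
Your proposal is correct and follows essentially the same route as the paper's proof sketch: the $\ell_1$-Lipschitz bound on $T_\sigma$ with constant $2/Z_{\min}$, the $W_2^2 \le \tfrac{D^2}{2}\|\mu-\nu\|_1$ comparison, the barycenter-optimality argument giving $W_2^2(\bar p,p^\star)\le 4\sum_i\alpha_i W_2^2(p_i,p^\star)$, the $\delta$-spacing conversion $\|\bar p-p^\star\|_2\le \sqrt{2}\,W_2(\bar p,p^\star)/\delta$, and the pseudoinverse bound via $\kappa_\Phi$. You in fact supply more detail than the paper (the explicit quotient-rule decomposition and the $\|x\|_2^2\le\|x\|_\infty\|x\|_1$ step), and your closing caveat about the entropic/Sinkhorn approximation error is a fair observation that the theorem, which concerns the exact barycenter, does not address.
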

\begin{proof}[Proof sketch]
(1) \emph{Shift-normalize is Lipschitz in $\ell_1$.}
For $p=T_\sigma(r)$, $q=T_\sigma(s)$, one shows
$\|p-q\|_1 \le (2/Z_{\min})\|r-s\|_1$ by a quotient-rule bound.
(2) \emph{From $\ell_1$ to $W_2$.}
On a finite metric space, $W_2^2(\mu,\nu)\le D^2\,\mathrm{TV}(\mu,\nu)=\tfrac{D^2}{2}\|\mu-\nu\|_1$.
Together with (1) this gives $W_2^2(p_i,p^\star)\le (D^2/Z_{\min})\,\varepsilon_i$.
(3) \emph{Barycenter contraction.}
By triangle inequality and the optimality of the Wasserstein barycenter (cf. §II-F),
$W_2^2(\bar p,p^\star)\le 4\sum_i \alpha_i W_2^2(p_i,p^\star)$, yielding \eqref{eq:T1}.
(4) \emph{Back to rewards and parameters.}
With $Z=Z(r^\star)$, $\bar r_{Z(r^\star)}-r^\star = Z(r^\star)(\bar p-p^\star)$ and
$\|\bar p-p^\star\|_2 \le \sqrt{2}\,W_2(\bar p,p^\star)/\delta$, which implies \eqref{eq:T2}.
Finally $\|\hat\theta-\theta^\star\|_2\le \kappa_\Phi\,\|\bar r_{Z(r^\star)}-r^\star\|_2$, giving \eqref{eq:T3}.
\end{proof}
Theorem~\ref{thm:fed-irl-ot-main} implies two key properties. First, aggregation does not amplify noise: if each local reward is $\varepsilon_i$-close to $r^\star$, then the fused reward $\bar r$ and its recovered parameter vector $\hat\theta$ are $O((\sum_i \alpha_i \varepsilon_i)^{1/2})$-close to $r^\star$ and $\theta^\star$, respectively. Second, the dependence is only through the weighted average of local errors, so a single weak client cannot dominate the fused model.We now connect this stability in reward space to control performance. The next result bounds, for each client, the suboptimality gap in discounted return when executing the policy induced by the fused reward $\bar r$ instead of the (unknown) optimal policy for $r^\star$.
\begin{theorem}[Policy--performance bound for barycentric fusion]\label{thm:perf}
Fix a client $i\in\{1,\dots,K\}$ with MDP $M_i=(\mathcal S,\mathcal A,P_i,\gamma)$ and let $X=\mathcal S\times\mathcal A$ be finite, $|X|=n$. For any policy $\pi$, let
\begin{align}
\rho_i^\pi(x)\;:=\;(1-\gamma)\sum_{t=0}^\infty \gamma^t\,\mathbb P_{P_i,\pi}\big((s_t,a_t)=x\big),\qquad x\in X,
\end{align}
be the normalized $\gamma$-discounted occupancy measure (so $\|\rho_i^\pi\|_1=1$). For any reward vector $r\in\mathbb R^n$, define the (discounted) return of $\pi$ on client $i$ by
$
J_i(\pi;r)\;:=\;\frac{1}{1-\gamma}\,\langle \rho_i^\pi,\, r\rangle.
$
Let $\pi_r\in\arg\max_\pi J_i(\pi;r)$ be an optimal policy for reward $r$ on $M_i$. Then for any $r,s\in\mathbb R^n$,
\begin{equation}\label{eq:perf-core}
0\;\le\; J_i(\pi_r; r) - J_i(\pi_s; r)
\;\le\; \frac{2}{1-\gamma}\,\|r-s\|_\infty.
\end{equation}
In particular, with $r_\star$ denoting the population reward and $\bar r:=\bar r_{Z(r_\star)}$ the fused reward produced by the barycentric mapping and back‑projection of Theorem~1, we have for every client $i$,
\begin{equation}\label{eq:perf-rbar}
0\;\le\; J_i(\pi_{r_\star}; r_\star) - J_i(\pi_{\bar r}; r_\star)
\;\le\; \frac{2}{1-\gamma}\,\|r_\star-\bar r\|_\infty
\;\le\; \frac{2}{1-\gamma}\,\|r_\star-\bar r\|_2.
\end{equation}
Combining \eqref{eq:perf-rbar} with the reward-space bound of Theorem~1 yields the explicit end‑to‑end guarantee
\begin{equation}\label{eq:perf-explicit}
J_i(\pi_{r_\star}; r_\star) - J_i(\pi_{\bar r}; r_\star)
\;\le\; \frac{4\sqrt{2}\,D\,Z(r_\star)}{(1-\gamma)\,\delta\,\sqrt{Z_{\min}}}
\Bigg(\sum_{j=1}^K \alpha_j\,\varepsilon_j\Bigg)^{\!1/2}.
\end{equation}
\end{theorem}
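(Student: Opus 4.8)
The plan is to first prove the core inequality~\eqref{eq:perf-core} by a standard performance-difference (simulation) argument, and then obtain~\eqref{eq:perf-rbar} and~\eqref{eq:perf-explicit} by direct substitution. The one ingredient about occupancy measures that I would record up front is the elementary sensitivity estimate: for \emph{any fixed} policy $\pi$ and any two rewards $r,s\in\mathbb R^n$,
\[
\bigl|J_i(\pi;r)-J_i(\pi;s)\bigr|\;=\;\frac{1}{1-\gamma}\,\bigl|\langle \rho_i^\pi,\,r-s\rangle\bigr|\;\le\;\frac{1}{1-\gamma}\,\|\rho_i^\pi\|_1\,\|r-s\|_\infty\;=\;\frac{1}{1-\gamma}\,\|r-s\|_\infty,
\]
where the final equality uses the normalization $\|\rho_i^\pi\|_1=1$ stated in the theorem. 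This is the only place the occupancy-measure representation of $J_i$ is used.

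The second step is the telescoping decomposition
\[
J_i(\pi_r;r)-J_i(\pi_s;r)\;=\;\underbrace{\bigl[J_i(\pi_r;r)-J_i(\pi_r;s)\bigr]}_{\le\,\frac{1}{1-\gamma}\|r-s\|_\infty}\;+\;\underbrace{\bigl[J_i(\pi_r;s)-J_i(\pi_s;s)\bigr]}_{\le\,0}\;+\;\underbrace{\bigl[J_i(\pi_s;s)-J_i(\pi_s;r)\bigr]}_{\le\,\frac{1}{1-\gamma}\|r-s\|_\infty}.
\]
The middle bracket is nonpositive because $\pi_s$ maximizes $J_i(\,\cdot\,;s)$ on $M_i$ by definition, and the two outer brackets are each controlled by the sensitivity estimate above; summing yields the upper bound in~\eqref{eq:perf-core}. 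The lower bound $J_i(\pi_r;r)\ge J_i(\pi_s;r)$ is immediate from optimality of $\pi_r$ for reward $r$. The argument is insensitive to the (possibly non-unique) choice of maximizers $\pi_r,\pi_s$, so no measurable-selection subtlety arises.

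For~\eqref{eq:perf-rbar} I would instantiate~\eqref{eq:perf-core} with $r=r_\star$ and $s=\bar r$, using that $\pi_{r_\star}$ is optimal for $r_\star$ for the left inequality, and then apply the finite-dimensional norm comparison $\|v\|_\infty\le\|v\|_2$ with $v=r_\star-\bar r$. Finally,~\eqref{eq:perf-explicit} follows by inserting the reward-space bound~\eqref{eq:T2} of Theorem~\ref{thm:fed-irl-ot-main}—which applies verbatim since $\bar r=\bar r_{Z(r_\star)}$ is exactly the fused reward constructed there—into~\eqref{eq:perf-rbar}, the factor $2/(1-\gamma)$ combining with $2\sqrt2\,D\,Z(r_\star)/(\delta\sqrt{Z_{\min}})$ to give $4\sqrt2\,D\,Z(r_\star)/((1-\gamma)\,\delta\,\sqrt{Z_{\min}})$. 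I do not anticipate a genuine obstacle: the only points requiring care are the sign bookkeeping in the three-term identity and stating the sensitivity bound for an arbitrary fixed policy \emph{before} the optimal policies enter.
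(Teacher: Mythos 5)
Your proposal is correct and follows essentially the same route as the paper: your three-term telescope is algebraically the same add-and-subtract identity the paper uses (middle bracket nonpositive by optimality of $\pi_s$ for $s$), and bounding the two outer brackets separately via $\|\rho_i^\pi\|_1=1$ is just the split form of the paper's single Hölder step with $\|\rho_i^{\pi_r}-\rho_i^{\pi_s}\|_1\le 2$. The remaining steps — instantiating $r=r_\star$, $s=\bar r$, applying $\|\cdot\|_\infty\le\|\cdot\|_2$, and substituting the reward-space bound of Theorem~\ref{thm:fed-irl-ot-main} — match the paper exactly.
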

\begin{proof}
Because $\pi_r$ maximizes $J_i(\pi;r)$, the left inequality in \eqref{eq:perf-core} is immediate. For the upper bound,
\begin{align}
\begin{aligned}
J_i(\pi_r; r) - J_i(\pi_s; r)
&= \frac{1}{1-\gamma}\,\big\langle \rho_i^{\pi_r}-\rho_i^{\pi_s},\, r \big\rangle \\
&= \frac{1}{1-\gamma}\,\big\langle \rho_i^{\pi_r}-\rho_i^{\pi_s},\, r-s \big\rangle
\;+\; \frac{1}{1-\gamma}\,\Big(\langle \rho_i^{\pi_r}, s\rangle - \langle \rho_i^{\pi_s}, s\rangle\Big).
\end{aligned}
\end{align}
The second term is $\le 0$ because $\pi_s$ maximizes $J_i(\pi;s)$. Applying Hölder's inequality yields
\begin{align}
J_i(\pi_r; r) - J_i(\pi_s; r)
\;\le\; \frac{1}{1-\gamma}\,\|\rho_i^{\pi_r}-\rho_i^{\pi_s}\|_1\,\|r-s\|_\infty.
\end{align}
Since $\rho_i^\pi$ are probability vectors on $X$, $\|\rho_i^{\pi_r}-\rho_i^{\pi_s}\|_1\le \|\rho_i^{\pi_r}\|_1+\|\rho_i^{\pi_s}\|_1=2$, giving \eqref{eq:perf-core}. Taking $r=r_\star$ and $s=\bar r$ gives \eqref{eq:perf-rbar}, and $\|\,\cdot\,\|_\infty\le\|\,\cdot\,\|_2$ gives the last inequality in \eqref{eq:perf-rbar}. Finally, Theorem~1 (reward-space bound) provides
\begin{align}
\|r_\star-\bar r\|_2
\;\le\; \frac{2\sqrt{2}\,D\,Z(r_\star)}{\delta\,\sqrt{Z_{\min}}}
\Big(\sum_{j=1}^K \alpha_j\,\varepsilon_j\Big)^{\!1/2},
\end{align}
and substituting this into \eqref{eq:perf-rbar} yields \eqref{eq:perf-explicit}.
\end{proof}
Inequality~\eqref{eq:perf-explicit} quantitatively explains the gains in our empirical results, as the barycentric fusion contracts the reward error (Theorem~\ref{thm:fed-irl-ot-main}), the induced policy on each client is guaranteed to achieve a return within $O\!\left((\sum_j \alpha_j\varepsilon_j)^{1/2}\right)$ of the expert, with a prefactor that depends only on $(\gamma, D, \delta, Z(r_\star), Z_{\min})$ and not on the number of clients or their heterogeneity.
Under bounded local IRL error, barycentric fusion produces (i) a global reward and parameter vector that remain close to the population reward, and (ii) a policy whose discounted return on each client’s own dynamics is provably near-optimal under that population reward. 
\begin{table}[t]
    \centering
    \small
    \caption{Average success rates (\%) with mean $\pm$ std. The clients are evaluated on in-distributions (client environments) and held-out (unseen environments). The $20\times20$ case has 20 clients, the $10\times10$ case has 10 clients, $5\times5$ case has 3 clients.}
    \label{tab:avg_success_hier}
    \setlength{\tabcolsep}{5.5pt}
    \begin{tabular}{lcccccc}
        \toprule
        & \multicolumn{3}{c}{\textbf{In-distribution}} & \multicolumn{3}{c}{\textbf{Held-out}} \\
        \cmidrule(lr){2-4} \cmidrule(lr){5-7}
        \textbf{Environment} & \textbf{Local} & \textbf{Mean} & \textbf{Barycenter} & \textbf{Local} & \textbf{Mean} & \textbf{Barycenter} \\
        \midrule
        20$\times$20 & $73.7 \pm 14.0$ & $73.5 \pm 14.5$ & $\mathbf{76.4 \pm 13.4}$ & -- & $75.3 \pm 23.0$ & $\mathbf{81.4 \pm 19.0}$ \\
        10$\times$10 & $80.9 \pm 13.9$ & $81.6 \pm 14.2$ & $\mathbf{89.1 \pm 10.7}$ & -- & $76.1 \pm 8.0$ & $\mathbf{80.3 \pm 9.9}$ \\
        5$\times$5 & $84.2 \pm 12.6$ & $85.1 \pm 11.9$ & $\mathbf{90.5 \pm 9.8}$ & -- & $78.4 \pm 7.3$ & $\mathbf{82.6 \pm 8.1}$ \\
        \bottomrule
    \end{tabular}
\end{table}

\begin{figure}[t]
    \centering
    \includegraphics[width=0.5\linewidth]{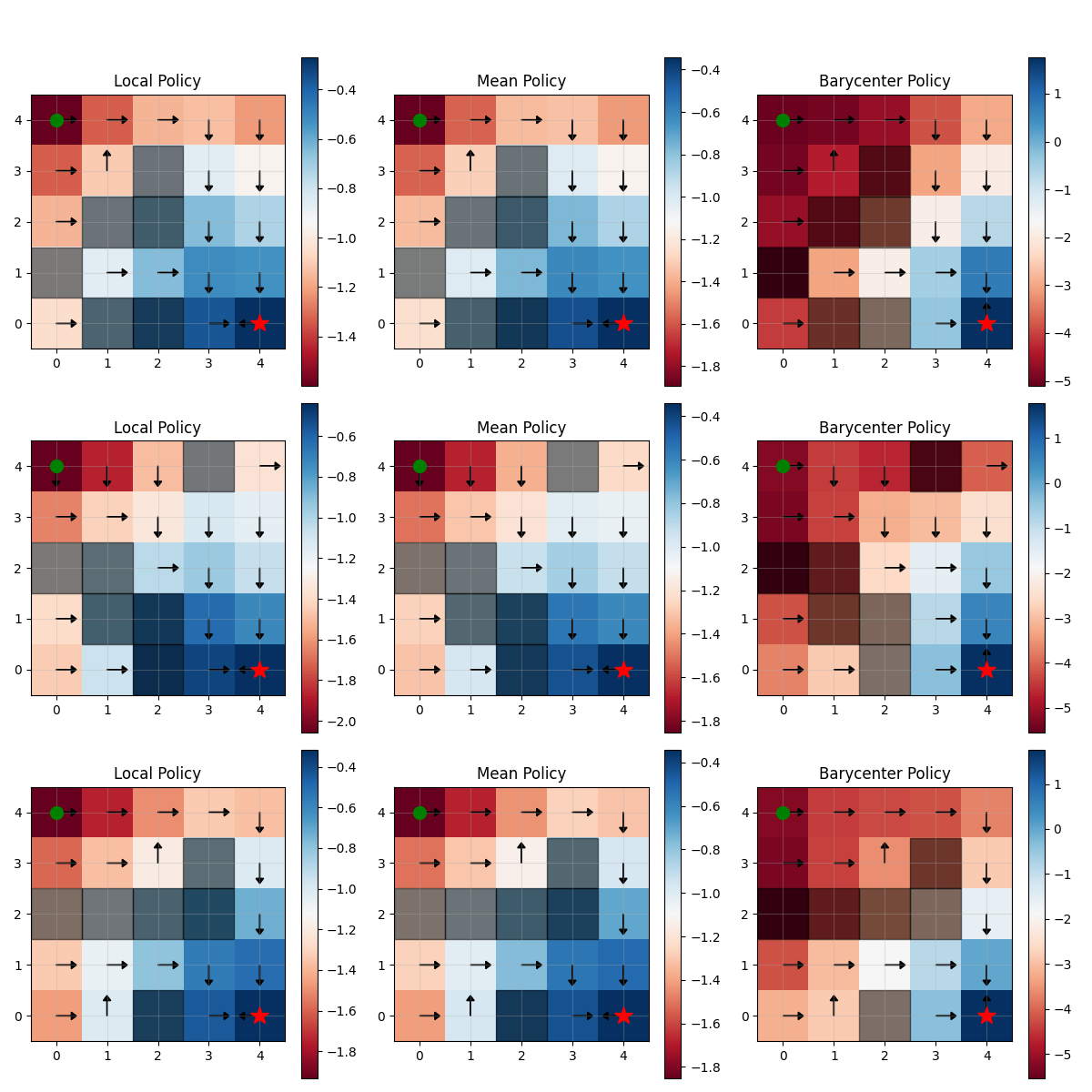}
    \caption{Local vs. mean vs. barycentric rewards for three heterogeneous 5×5 grid-world clients. Local MaxEnt IRL shows spurious obstacle attraction; parameter averaging blurs semantics. Wasserstein barycentric fusion preserves geometry, suppresses artifacts, and recovers a smooth, goal-consistent field across clients.}
    \label{fig:method_comp}
\end{figure}

\section{Empirical Results}\label{sec:empirical}
In this section, we present two case studies. The first provides an in-depth analysis in a discrete grid-world setting, while the second evaluates our method across a set of Gymnasium benchmarks.

\subsection{Grid-World Navigation}
We first evaluate the framework on a stochastic grid-world navigation task. All clients share the same state and action spaces but differ in obstacle layouts and transition stochasticity. The feature representation uses the $\ell_2$ distance to the goal and to the nearest obstacle, with episodes terminating on obstacle collision; each agent’s slip is drawn as $u \sim U(0, 0.10)$. As shown in Table~\ref{tab:avg_success_hier}, barycentric fusion consistently outperforms both unfused local rewards and simple parameter averaging across all grid sizes, with the largest gains in the most heterogeneous $20\times20$ setting. The fused reward is smoother and transfers better, yielding higher success rates in both in-distribution and held-out environments. Fig.~\ref{fig:method_comp} illustrates that while several local learners assign spurious attractive (negative) weights to obstacles due to limited coverage, barycentric fusion recovers a globally consistent reward with a positive coefficient on obstacle distance, encouraging safer navigation.

\begin{figure}[t]
    \centering
    \begin{minipage}{0.05\columnwidth}
        \centering
        \raisebox{1cm}{\rotatebox{90}{True Environment Return}}
    \end{minipage}%
    \begin{minipage}{0.93\columnwidth}
        \centering
        \begin{minipage}[t]{0.49\linewidth}
            \centering
            \includegraphics[width=\linewidth]{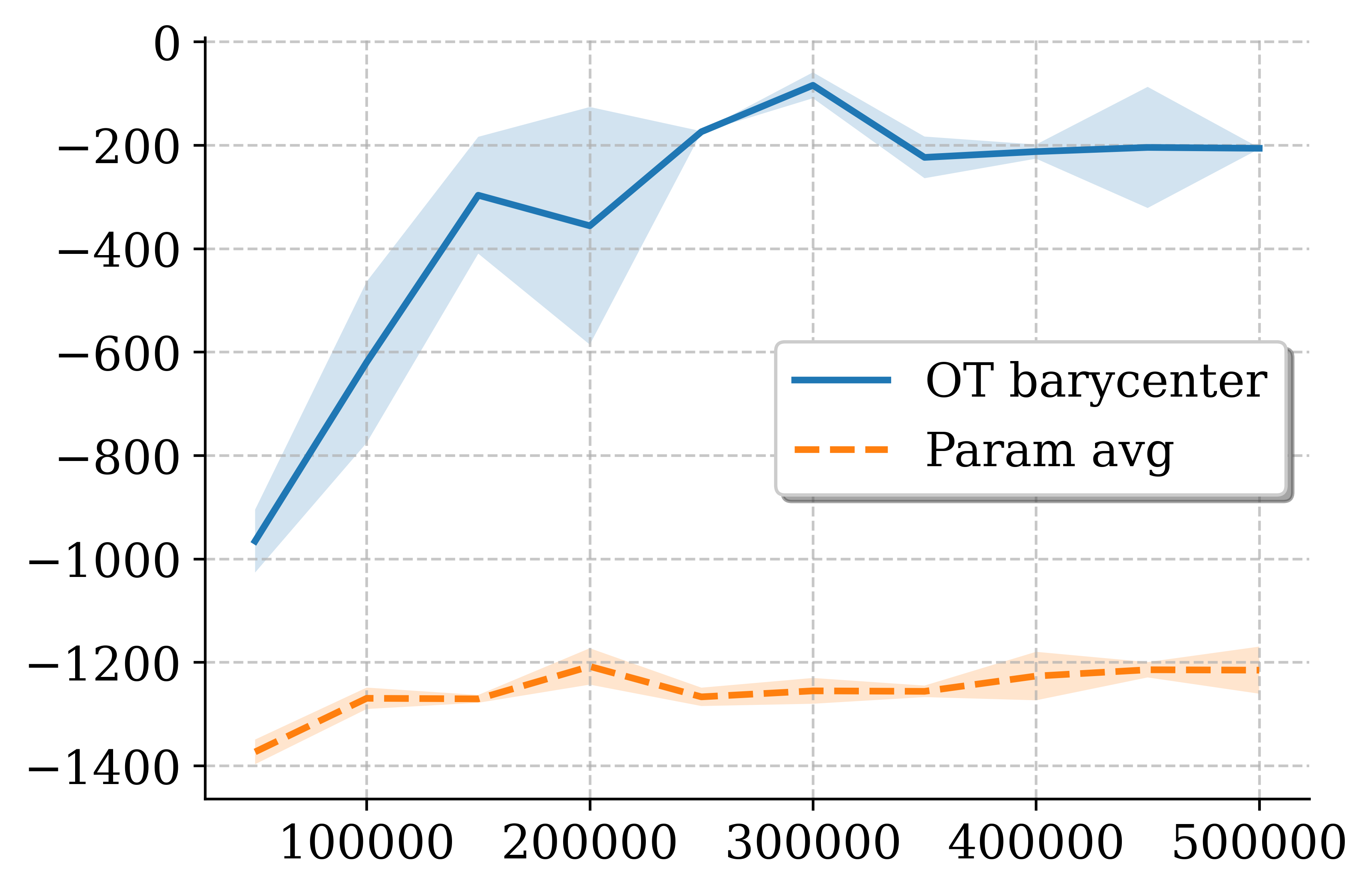}
        \end{minipage}\hfill
        \begin{minipage}[t]{0.49\linewidth}
            \centering
            \includegraphics[width=\linewidth]{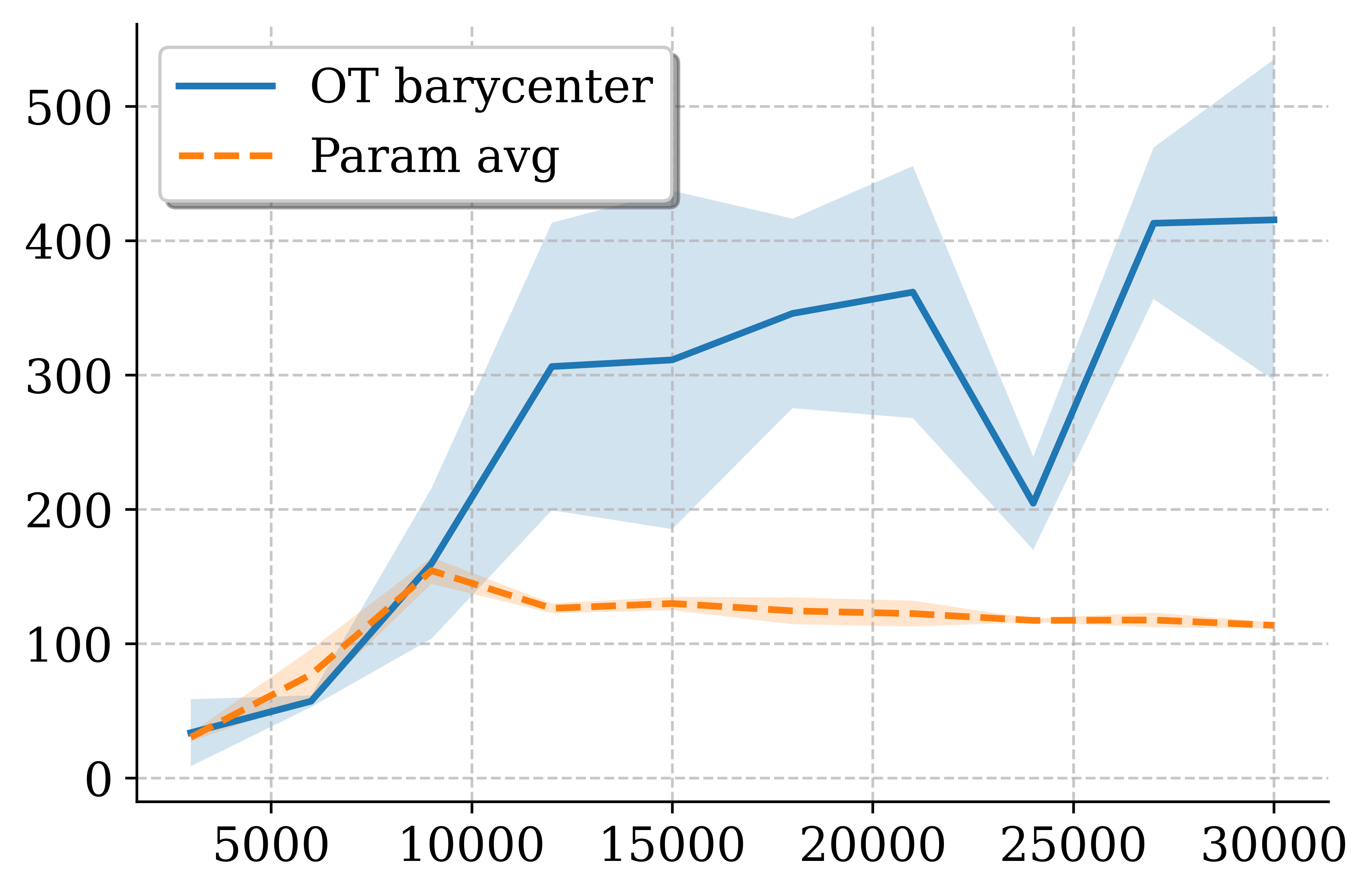}
        \end{minipage}

        \vspace{0.3em}
        \parbox{\linewidth}{\centering PPO Training Steps}
    \end{minipage}

    \caption{
    Learning dynamics across environments. The left panel shows results for \texttt{Pendulum-v1} and the right for \texttt{CartPole-v1}. Curves report mean $\pm$~std true environment return versus PPO training timesteps under parameter averaging and Wasserstein barycentric reward fusion.
    }
    \label{fig:learning_curves_all}
\end{figure}

\subsection{Gymnasium Benchmarks}
We next evaluate the barycentric fusion on continuous-control environments from the Gymnasium suite \cite{felten_toolkit_2023}, where each client trains under slightly perturbed physical parameters (e.g., torque limits or gravity) to emulate heterogeneous robotic and environmental dynamics. Unlike the discrete grid-world setting, the continuous state and action spaces here require a feature representation that generalizes smoothly. We adopt radial basis function (RBF) \cite{majdisova2017radial} features centered over a uniform lattice of sampled states and actions. For tractability in computing the Wasserstein barycenter, these continuous spaces are discretized to construct a finite support for the transport cost. 
The learning curves in Fig.~\ref{fig:learning_curves_all} show that barycentric fusion yields higher returns when compared to parameter averaging. 

\subsection{Ablation}
Finally, we run an ablation in the stochastic $5\times 5$ grid-world with $10$ heterogeneous clients to evaluate the claims made by Theorem~\ref{thm:fed-irl-ot-main}, specifically no weak client dominates. For a fraction of weak clients $p \in \{0.00, 0.25, 0.50\}$ we uniformly sample $p\cdot K$ clients and reduce their local MaxEnt IRL iterations by a random multiplicative factor, while the remaining clients use the full budget. From the resulting local parameters we construct a mean-fused model and a Wasserstein-barycentric fused model and evaluate their success rates the same as previously done. As summarized in Table~\ref{tab:weak_client_ablation}, barycentric fusion matches the mean baseline when $p=0$ and degrades more slowly as $p$ increases, maintaining a consistent advantage especially at $p=0.50$, in line with the stability guarantee of Theorem~\ref{thm:fed-irl-ot-main}.
\begin{table}[t]
    \centering
    \small
    \caption{Ablation on the fraction of weak clients $p$ in the stochastic $5\times5$ grid-world. Reported is mean $\pm$ std success rate (\%) over three seeds.}
    \label{tab:weak_client_ablation}
    \setlength{\tabcolsep}{5.5pt}
    \begin{tabular}{lcccc}
        \toprule
        & \multicolumn{2}{c}{\textbf{In-distribution}} & \multicolumn{2}{c}{\textbf{Held-out}} \\
        \cmidrule(lr){2-3} \cmidrule(lr){4-5}
        $p$ & \textbf{Mean} & \textbf{Barycenter} & \textbf{Mean} & \textbf{Barycenter} \\
        \midrule
        0.00 & $81.97 \pm 3.10$ & $\mathbf{81.97 \pm 1.97}$ & $86.43 \pm 1.86$ & $\mathbf{88.27 \pm 2.21}$ \\
        0.25 & $84.43 \pm 3.35$ & $\mathbf{86.20 \pm 3.15}$ & $84.83 \pm 2.76$ & $\mathbf{85.50 \pm 3.02}$ \\
        0.50 & $80.17 \pm 7.94$ & $\mathbf{85.27 \pm 2.76}$ & $82.13 \pm 0.91$ & $\mathbf{85.67 \pm 2.73}$ \\
        \bottomrule
    \end{tabular}
\end{table}

\section{Discussion}\label{sec:discussion}
The presented results show that the barycentric reward fusion offers a geometry-aware alternative to conventional parameter averaging in federated IRL with many advantages. Across both discrete and continuous domains, the Wasserstein barycenter provides a consistent aggregation mechanism that captures shared behavioral semantics while mitigating the effect of undertrained or noisy local models. This property is particularly valuable in heterogeneous settings where direct parameter averaging may degrade due to differing feature scales, local biases, or insufficient demonstrations. Although the method shows promise, it also has a significant weaknesses. As discussed in the technical appendix, scalability is a central issue. The memory complexity of the discrete entropic Wasserstein barycenter scales as $\mathcal{O}(n^2)$ in the number $n$ of support points, since one must store and operate on an $n\times n$ cost matrix. Consequently, the per-iteration runtime of Sinkhorn-based barycenter solvers is also quadratic in $n$ (and linear in the number of client distributions), which quickly becomes prohibitive as the discretization is refined. In several environments, the state--action space was large enough that a reasonably fine discretization would require prohibitively many support points. In these scenarios we were forced to adopt coarse supports, and under such coarse discretization the method struggled and performed noticeably worse than simple parameter averaging.

\section{Conclusion}
This paper introduced a federated IRL framework that fuses local MaxEnt IRL rewards via a Wasserstein barycenter rather than by parameter averaging. Theoretically, the barycentric fusion is stable under bounded local estimation error, and empirically it yields more interpretable and transferable rewards across heterogeneous clients. While scalability and coverage resolution remain open challenges, the results demonstrate that geometry-aware reward aggregation can serve as a foundation for reliable, privacy-preserving collaboration among resource-limited agents. Future extensions will explore continuous barycenter formulations and personalized fine-tuning for large-scale, real-world robotic systems.

\bibliography{l4dc2026-sample}

\newpage
\appendix

\section{Setups}

In this section we detail each gymnasium setup that we evaluate in our empirical results sections.

\subsection{Pendulum-v1}
For \texttt{Pendulum-v1}, each client is assigned a continuous single-link pendulum with slightly different physical parameters. Client $i$ uses a scaled torque limit, damping coefficient, and gravitational constant,
with values increasing linearly in $i$, together with small Gaussian process noise added to the observations. Concretely, we set a torque scale of $1.0 + 0.1 \cdot i$, a damping scale of $1.0 + 0.05 \cdot i$, gravity $9.81 + 0.2 \cdot i$, and observation noise with standard deviation $0.005 \cdot i$. This produces mild heterogeneity in the ease of control and sensitivity to actions while preserving the same stabilization objective. For IRL and barycentric fusion, we discretize the $(\theta,\dot{\theta},a)$ space using \texttt{--n-theta 32 --n-thdot 21 --n-action 5}.

\subsection{CartPole-v1}
For \texttt{CartPole-v1}, each client operates a cart–pole system with different physical scales. Client $i$ modifies the cart force magnitude, gravity, cart mass, pole mass, and pole length by client-dependent scaling factors. Specifically, we scale the force magnitude by $1.0 + 0.1 \cdot i$, gravity by $1.0 + 0.005 \cdot i$, both masses by $1.0 + 0.05 \cdot i$, and the pole length by $1.0 + 0.05 \cdot i$, and we add Gaussian observation noise with standard deviation $0.005 \cdot i$. These changes alter the difficulty and sensitivity of the balancing task across clients while keeping the reward structure (per-step reward $+1$ until failure) fixed. For the barycenter, we discretize the state--action space using \texttt{--n-x 5 --n-xdot 5 --n-theta 7 --n-thetadot 5 --n-action 2}.

\subsection{LunarLanderContinuous-v3}
For \texttt{LunarLanderContinuous-v3}, we consider a continuous-thrust lunar lander with heterogeneous wind fields. Each client $i$ is instantiated with a different wind power and turbulence level while keeping gravity fixed. We set the wind power parameter to $10.0 + 1.0 \cdot i$ and the turbulence power to $1.0 + 0.2 \cdot i$, which induces increasingly challenging lateral disturbances for higher-index clients. The reward function is the standard shaping reward from the Gymnasium implementation. For barycentric fusion, we construct a product grid over the key state components (position and velocity) using \texttt{--n-ll-x 5 --n-ll-y 5 --n-ll-vx 5 --n-ll-vy 5}, along with a coarse discretization of angle, angular velocity, and leg-contact indicators.

\subsection{Acrobot-v1}
For \texttt{Acrobot-v1}, each client controls an underactuated two-link pendulum with client-specific link masses, lengths, and torque scales. Client $i$ scales both link masses and lengths by $1.0 + 0.05 \cdot i$, which in turn affects the center-of-mass positions and overall inertia. We also scale the available torques by a factor derived from the combined mass and length scaling, leading to different effective actuation strengths across clients. As in the other tasks, small Gaussian observation noise with standard deviation $0.005 \cdot i$ is added to the state. The reward remains the standard sparse terminal reward (successful swing-up). For the Wasserstein barycenter, we discretize the state--action space over the joint angles, angular velocities, and discrete actions using \texttt{--n-theta1 9 --n-theta2 9 --n-thdot1 5 --n-thdot2 5 --n-action 3}.

\section{Hyperparmeters}
This section documents the main hyperparameters used for expert training, IRL optimization, aggregation, and evaluation. Table~\ref{tab:hp-general} details each environments hyperparameters while Table~\ref{tab:hp-fusion-bary} details the hyperparameters specific to our formulation. The parameter average fusion does not have any hyperparameters.

\begin{table}[!htbp]
    \centering
    \caption{General training hyperparameters by stage and environment.}
    \label{tab:hp-general}
    \begin{tabular}{lcccc}
        \toprule
        Hyperparameter & Pend. & Cart. & LLC & Acro \\
        \midrule
        \multicolumn{5}{c}{Expert training} \\
        \midrule
        Expert trainer / algorithm          & PPO & PPO & PPO & PPO \\
        Total expert steps                  & 50000 & 200000 & 300000 & 5000 \\
        Expert batch size                   & 64 & 64 & 64 & 64 \\
        Expert learning rate                & $3\cdot10^{-3}$& $3\cdot10^{-3}$& $3\cdot10^{-3}$&$3\cdot10^{-3}$ \\
        Discount factor $\gamma$            &0.99 & 0.99& 0.99&0.99 \\
        GAE $\lambda$             & 0.95& 0.95& 0.95&0.95 \\
        Clip range            & 0.2& 0.2& 0.2& 0.2\\
        Number of clients                   & 6& 6& 6& 6\\
        \midrule
        \multicolumn{5}{c}{Trajectory generation} \\
        \midrule
        Episodes per client                 & 50 &50 & 50&50 \\
        Max episode length                  & 200 & 500 & 1000 & 500  \\
        \midrule
        \multicolumn{5}{c}{IRL} \\
        \midrule
        IRL algorithm                       & MaxEnt& MaxEnt& MaxEnt& MaxEnt\\
        IRL iterations / steps              & 50 & 25 & 100 & 25 \\
        IRL learning rate                   & $5\cdot10^{-3}$& $5\cdot10^{-3}$& $5\cdot10^{-3}$& $5\cdot10^{-3}$\\
        Entropy weight / temperature        & $1\cdot10^{-3}$& $1\cdot10^{-3}$& $1\cdot10^{-3}$& $1\cdot10^{-3}$\\
        Feature map / dimension             & 64& 64& 64& 64\\
        Reward regularization (e.g., L2)    & $l_2$& $l_2$& $l_2$& $l_2$\\
        \midrule
        \multicolumn{5}{c}{Evaluation} \\
        \midrule
        Evaluation trainer / algorithm      & PPO& PPO& PPO&PPO \\
        Evaluation steps / episodes/client  & 500000& 30000 & 2000000&25000 \\
        Number of seeds                     & 10& 10&10 &10 \\
        \bottomrule
    \end{tabular}
\end{table}
\begin{table}[!htbp]
    \centering
    \caption{Wasserstein barycentric fusion hyperparameters for the proposed method.}
    \label{tab:hp-fusion-bary}
    \begin{tabular}{lcccc}
        \toprule
        Hyperparameter & Pend. & Cart. & LLC & Acro \\
        \midrule
        \multicolumn{5}{c}{Support discretization} \\
        \midrule
        State grid resolution               & $32\times21$ & $5^3\times7$ & $5^4\times1^3$ & $9^2\times5^2$ \\
        Action grid resolution              & 5 & 2 & 3 & 5 \\
        Total number of support points $n$  & 3360 & 1750 & 1875 & 10125 \\
        \midrule
        \multicolumn{5}{c}{OT barycenter solver} \\
        \midrule
        Entropic regularization $\varepsilon$    & 0.5& 0.5& 0.5& 0.5\\
        Sinkhorn iterations per barycenter step  & 10& 10& 10& 10\\
        Number of barycenter iterations          & 50& 50& 50& 50\\
        \bottomrule
    \end{tabular}
\end{table}

\section{Limitations}
Figure~\ref{fig:learning_curves_all_limit} illustrates two continuous-control environments \texttt{LunarLanderContinuous-v3} and \texttt{Acrobot-v1}—where the benefits of barycentric fusion begin to diminish. Such coarse discretization was necessary to maintain tractability but inherently limits the precision of the recovered reward and the smoothness of the resulting policy gradients. This exposes two primary limitations of the current framework. First, scalability: the computational cost of computing Wasserstein barycenters grows rapidly with the dimensionality and resolution of the discretized support, as both memory and runtime scale quadratically with the number of bins, even under entropic regularization. Second, resolution on underexplored states: when expert demonstrations offer limited coverage, the fused reward tends to oversmooth or regress toward the entropic prior, reducing fidelity in low-density regions of the state space.
\begin{figure}[t]
    \centering
    \begin{minipage}{0.05\columnwidth}
        \centering
        \raisebox{1cm}{\rotatebox{90}{True Environment Return}}
    \end{minipage}%
    \begin{minipage}{0.93\columnwidth}
        \centering
        \begin{minipage}[t]{0.49\linewidth}
            \centering
            \includegraphics[width=\linewidth]{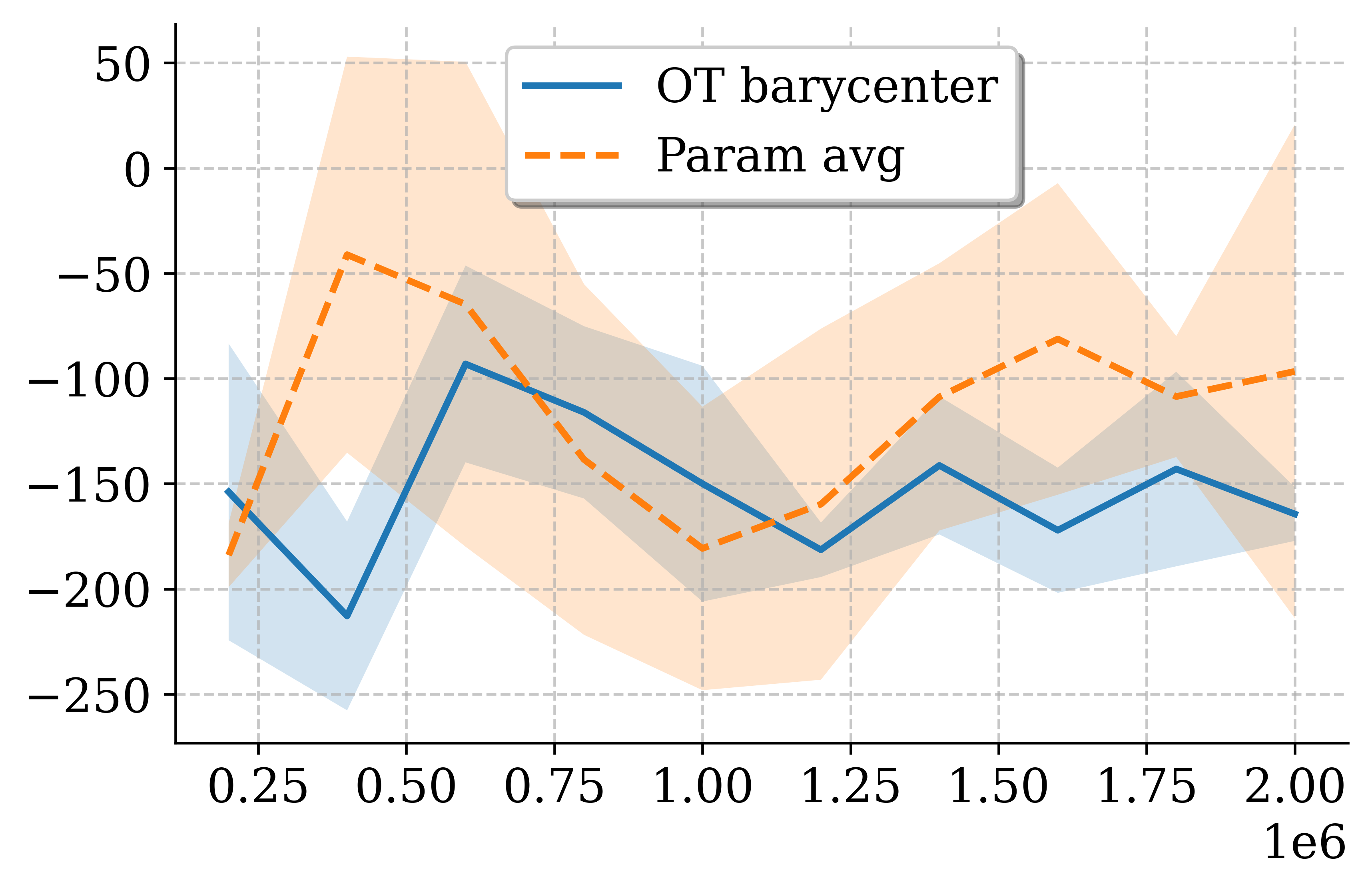}
        \end{minipage}\hfill
        \begin{minipage}[t]{0.49\linewidth}
            \centering
            \includegraphics[width=\linewidth]{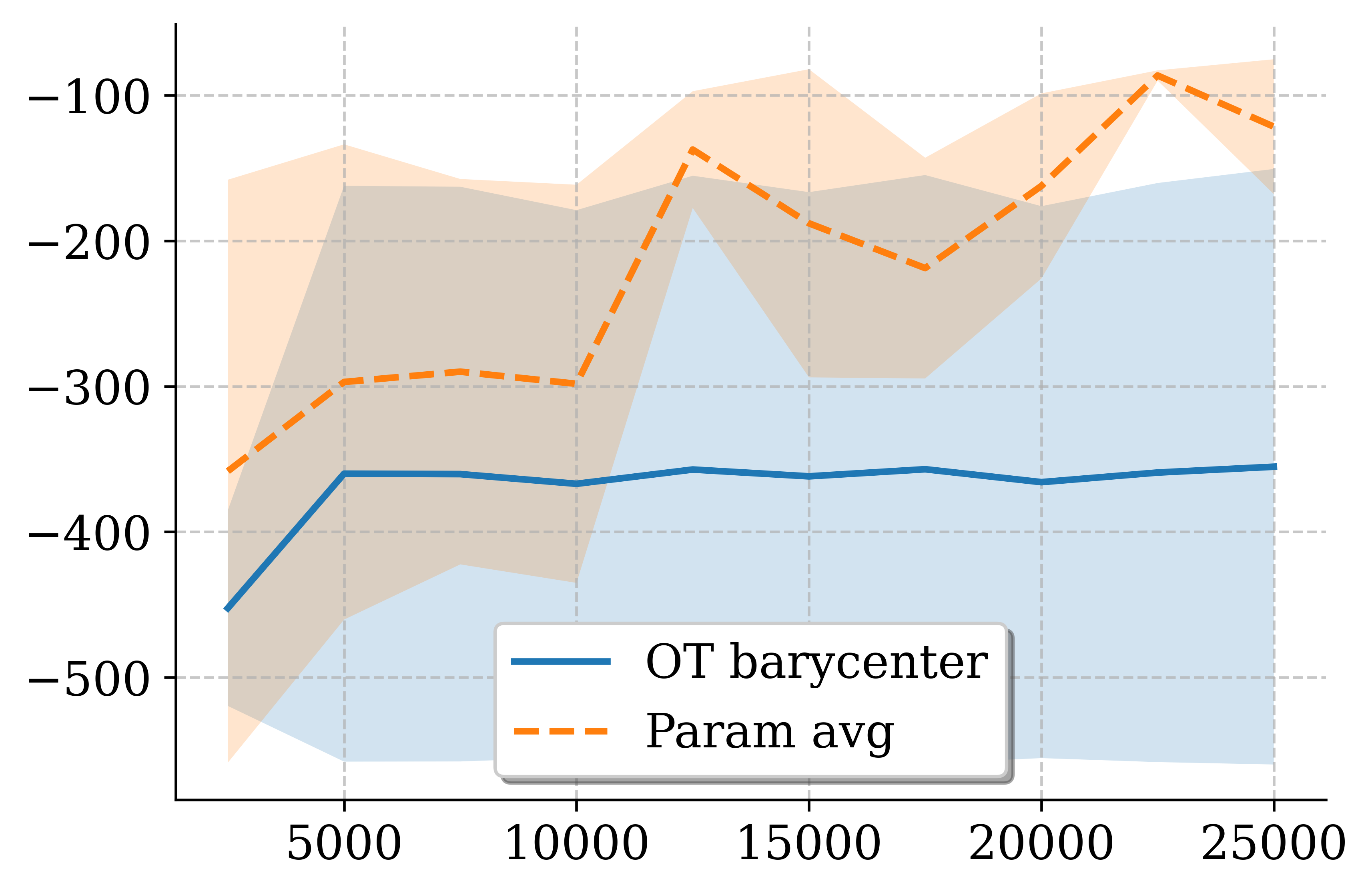}
        \end{minipage}
        \vspace{0.3em}
        \parbox{\linewidth}{\centering PPO Training Steps}
    \end{minipage}
    \caption{
    Learning dynamics across higher-dimensional environments.
    Mean $\pm$~std true environment return versus PPO training timesteps
    under parameter averaging and Wasserstein barycentric reward fusion.
    These cases required coarse discretization due to the high dimensionality of the support.
    }
    \label{fig:learning_curves_all_limit}
\end{figure}

\section{Code Availability}
Our repository is available under the Apache License version 2.0. All checkpoints, output, and data can be provided as a request to the corresponding author.

\end{document}